\documentclass[letterpaper]{article}
\usepackage{aaai}
\usepackage{times}
\usepackage{helvet}
\usepackage{courier}
\usepackage{latexsym}
\usepackage{verbatim}
\usepackage{multirow}
\usepackage{graphicx}
\usepackage{hyperref}
\usepackage{tikz}
\usepackage{color}
\usepackage{amssymb,amsmath,amsthm}
\usepackage{comment}
\usepackage{multirow}
\usepackage{bm}
\usepackage{comment}
\usepackage{multirow}
\usepackage[T1]{fontenc}
\usepackage{caption}
\usepackage{subcaption}

\usepackage{algorithm,algorithmic}

\newcommand{\newcite}[1]{\citeauthor{#1} \shortcite{#1}}

\newcommand{\kmeans}{\textsc{kmeans}}
\newcommand{\kmeanspp}{\textsc{kmeans++}}
\newcommand{\kmeansrand}{\textsc{kmeansRand}}

\newcommand{\kmeansdpp}{\textsc{kmeansD++}}
\newcommand{\kkmeansdpp}{\textsc{kmeansD$_k$++}}
\newcommand{\deter}[0]{determinantal}

\newtheorem{lemma}{Lemma}

\frenchspacing
\setlength{\pdfpagewidth}{8.5in}
\setlength{\pdfpageheight}{11in}
\pdfinfo{}
\setcounter{secnumdepth}{0}  
 \begin{document}
%
\title{Notes on Using Determinantal Point Processes for Clustering with Applications to Text Clustering}

\author{Apoorv Agarwal\\Columbia University\\New York, NY, USA\\ \texttt{apoorv@cs.columbia.edu } \And Anna Choromanska\\Courant Institute of Mathematical Sciences\\New York, NY, USA\\ \texttt{achoroma@cims.nyu.edu} \And Krzysztof Choromanski\\Google Research\\New York, NY, USA\\ \texttt{kchoro@gmail.com}}

\maketitle
\begin{abstract}
\begin{quote}
In this paper, we 
compare three initialization schemes for the
{\kmeans} clustering algorithm: 1) random initialization ({\kmeansrand}),
2) {\kmeanspp}, and 3) {\kmeansdpp}.
Both {\kmeansrand} and {\kmeanspp} have a major 
that the value of $k$ needs to be set by the user
of the algorithms.~\cite{NIPS2013_5008} recently proposed a novel
use of determinantal point processes
for sampling the initial centroids for the {\kmeans} 
algorithm (we call it {\kmeansdpp}). 
They, however, do not provide any
evaluation establishing that {\kmeansdpp} is better
than other algorithms. In this paper, we show
that the performance of {\kmeansdpp} is comparable  to {\kmeanspp}
(both of which are better than {\kmeansrand}) with {\kmeansdpp}
having an additional that it can automatically approximate the
value of $k$.
\end{quote}
\end{abstract}

\section{Introduction}
\textsc{Clustering} is one of the most challenging problems in machine learning due to the lack of supervision and difficulty to evaluate its quality. Its aim is to partition the data into groups, called clusters, such that the members of each group are more similar to each other than to the members of any other group under some measure of similarity, e.g. Euclidean distance. Among many clustering algorithms, {\kmeans} algorithm~\cite{Lloyd:2006:LSQ:2263356.2269955}, also known as Lloyd's algorithm, is one of the most widely-used, simple and easy to
implement clustering algorithm that  works well in practice. 
However, it has no theoretical guarantees in terms
of how far the resulting clustering is from the optimal
clustering. \newcite{Arthur:2007:KAC:1283383.1283494}
proposed an algorithm called {\bf {\kmeanspp}}
that \emph{samples} the initial centroids for the clustering
algorithm from among the data points in a way that
the {\kmeans} clustering algorithm is able to 
achieve theoretical guarantees. The underlying idea
is that sampling takes into account 
the Euclidean distance between points --
higher
the distance between a candidate data point from the already selected centroids, 
higher the probability of selecting this data point as an initial centroid. 
However, there is one major limitation: 
the number of clusters $k$
needs to be determined by the user of the algorithm. 

In this paper, we consider an alternative sampling scheme to the {\kmeanspp} algorithm, a new technique of sampling
the initial set of centroids 
for the {\kmeans} clustering algorithm
that overcomes the aforementioned limitation. The new approach was proposed only recently~\cite{NIPS2013_5008} and uses determinantal point processes (DPPs)~\cite{Kulesza2012} 
for sampling. However, the main
 focus of their paper was to speed up the DPP sampling algorithm. 
\newcite{Reichart2013} use 
DPPs to cluster verbs with similar sub-categorization
frames and selectional preferences.
However, their presentation of the clustering
technique is tied to the task and 
not presented as a general clustering strategy.
Neither of the aforementioned works compare the 
DPP initializer with the {\kmeanspp} initializer
and hence do not provide evidence 
that one has advantages over the other. 

The DPP sampling procedure has a desirable property
(for initializing the {\kmeans} algorithm)
that it samples a \emph{diverse} sub-set of points~\cite{Kulesza2012}.
 In spirit, the notion of diversity in the context
 of DPPs  
 is similar to the notion of Euclidean distance 
 in the context of {\kmeanspp} (DPP samples diverse points while {\kmeanspp} samples points that are far in terms
 of Euclidean distance).
We explore this conceptual connection
between the two sampling techniques 
and provide empirical evidence that {\kmeansdpp}
is as good as {\kmeanspp} with additional advantages.
In the settings, where {\kmeanspp} cannot be used, we compare {\kmeansdpp} algorithm with the randomly initialized {\kmeans} algorithm, which we call {\kmeansrand}, and show superior performance of the former.  We show results on a synthetic data-set and
 a text clustering task that 
 was the motivation for
 us to develop a technique to approximate $k$ 
 for a data-set automatically.




\section{Related work}
\label{sec:related}


In this paper we primarily focus on the center-based clustering problem where the large dataset can be fairly well represented by a small set of cluster centers, e.g. a cluster center can be a convex combination of the data points in this cluster (we will denote the number of clusters as $k$) or the most 'representative' data point from among cluster data points. The most popular clustering algorithm that is used in this setting is the {\kmeans} algorithm and its soft version, Expectation-Maximization (EM)~\cite{Dempster77maximumlikelihood,Liang09onlineem}. Despite their simplicty, both algorithms suffer many problems which prevents their usage in practical problems. They have no theoretical performance guarantees and the solution they recover is extremely sensitive to initialization which usually is done uniformly at random (the solution they converge to can be arbitrarily bad)~\cite{vonLuxburg:2010,Arthur:2007:KAC:1283383.1283494}. Also, they may lead to potential instability~\cite{Bubeck:2009,conf/colt/ShamirT08,conf/nips/RakhlinC06,10.1109/TPAMI.2006.226}.

There only exists few successful attempts to improve the performance of the {\kmeans} algorithm in such a way that the resulting method does have theoretical performance guarantees, meaning it provably approximates a certain measure of clustering quality such as an objective function\footnote{Standard theoretical guarantees show that the objective function to which the algorithm converges is upper-bounded by the optimal value of the objective function multiplied by some bounded small constant greater than $1$.}.

The most widely-cited objective function used to measure the quality of a center-based clustering is the $k$-means clustering objective which is computed as the sum of the squared distances between every data point and its closest cluster center. Optimizing this objective is an NP-hard problem~\cite{journals/ml/AloiseDHP09} and there only exists a few algorithms that provably approximate it \cite{Arthur:2007:KAC:1283383.1283494,Ailon}.
The most popular among them is the {\kmeanspp} algorithm which achieves approximation factor $\mathcal{O}(\log k)$. Other algorithms, this time with constant approximation with respect to the same objective, that were published in the literature include (i) the \textsc{kmeans$\#$} algorithm~\cite{Ailon} which, as opposed to the {\kmeanspp} algorithm, returns more than $k$ centers ($\mathcal{O}(k\log k)$), (ii) adaptive sampling-based approach~\cite{AggarwalDK09}, which returns $\mathcal{O}(k)$ centers, (iii) local search technique~\cite{conf/compgeom/KanungoMNPSW02}, and (iv) online clustering with experts algorithm~\cite{journals/jmlr/ChoromanskaM12}. 

Other notable clustering approaches mainly focus on minimizing other, often less-descriptive to the center-based clustering problem, objectives (e.g. $k$-center or $k$-medoid objective)~\cite{Beygelzimer06covertrees,Charikar:1997:ICD:258533.258657,Guha:2003:CDS:776752.776777}. Among these techniques also spectral methods~\cite{Luxburg:2007:TSC:1288822.1288832,conf/alt/ChoromanskaJKMM13} are widely-cited however they have a much more general scope than the center-based clustering problem and therefore will not be discussed in this paper.

\section{Determinantal Point Processes (DPPs)}
\label{sec:dpp}
\newcite{Kulesza2012} 
introduced applications and algorithms for
using
{\deter} point processes for machine learning.
Following is a summary of parts of their tutorial
relevant to this paper.

A point process is a probability measure $\mathcal{P}$
on $2^{\mathcal{Y}}$, the set of all subsets of ${\mathcal{Y}}$. 
This point process is {\deter} if the probability measure
satisfies the following property:
if ${\bf Y}$ is a random subset drawn according
to $\mathcal{P}$, then 
 for every subset $A \subseteq {\mathcal{Y}}$,
$$\mathcal{P}(A\subseteq {\bf Y}) = \det(K_A)$$
for some real $N \times N$ matrix $K$, indexed by the elements of ${\mathcal{Y}}$.
 $K_A \equiv [K_{ij}]_{i, j \in A}$ denotes the
restriction of $K$ to the entries indexed by elements of $A$,
  $\det(K_A)$ stands for the determinant 
of matrix $K_A$, and $\det{(K_{\phi}) = 1}$.
Say, $A$ is a set of two elements, \{$i, j$\}.
Using the above formula, 
$$\mathcal{P}(i, j \in {\bf Y}) = K_{ii}K_{jj} - K_{ij}K_{ji} = K_{ii}K_{jj} - K_{ij}^2$$

If the two elements, $i, j$ are similar,
then $K_{ij}$ is large, and the probability 
distribution over the two element set is small.
Therefore, DPPs, by definition, put a greater probability mass
on sets that have \emph{dissimilar} elements,
as compared to sets that have similar elements.
\newcite{Kulesza2012} present a sampling algorithm 
(Algorithm 1, page 16 of their tutorial) for sampling
from a DPP. Given a set of points, this algorithm selects
a subset of the most dissimilar points from the set.
 In spirit, the notion of diversity in the context
 of DPPs  
 is similar to the notion of euclidean distance 
 in the context of {\kmeanspp} (DPP samples diverse points while {\kmeanspp} samples points that are far in terms
 of euclidean distance).
We explore this conceptual connection
between the two sampling techniques
and provide empirical evidence that 
{\kmeansdpp} has advantages over {\kmeanspp}.

The most appealing aspect of the
DPP sampling
 algorithm is that
it is not required that the
number of dissimilar points be known in advance.
Given a set of points, the DPP sampler returns
a subset of dissimilar points. We use the \underline{cardinality
of this sampled subset as $k$} for running the {\kmeans}
clustering algorithm.
The DPP sampling algorithm, in addition, has a version,
called $k$-DPP \cite{Kulesza2012}, in which
one may specify $k$ as the cardinality of the subset of
dissimilar points to be sampled.
When we sample the initial centroids 
for the {\kmeans} clustering algorithm
using $k$-DPP, we refer to the overall scheme
as {\bf {\kkmeansdpp}}.

\begin{figure*}[t!]
\center
a)\includegraphics[width = 2in]{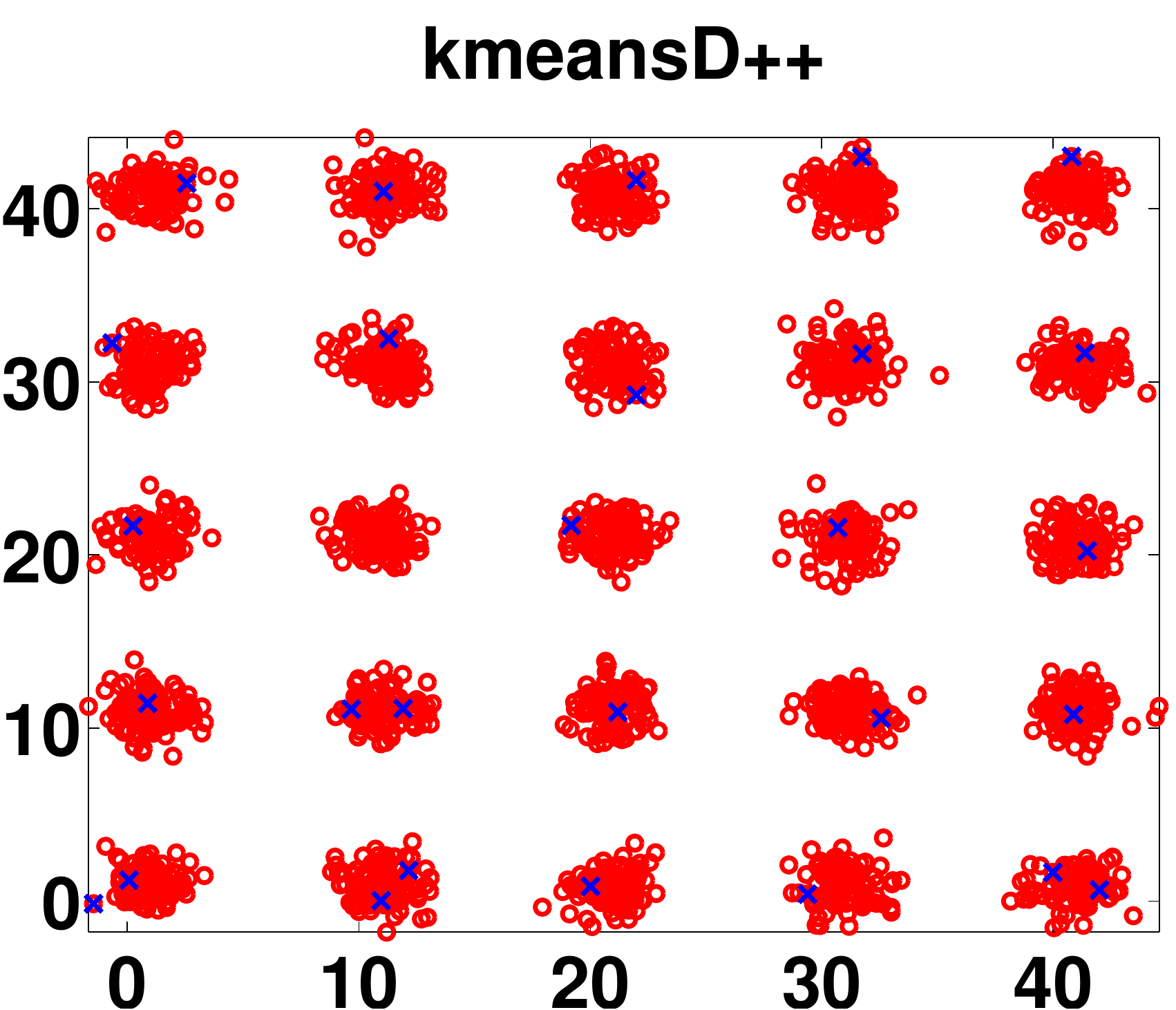}
b)\includegraphics[width = 2in]{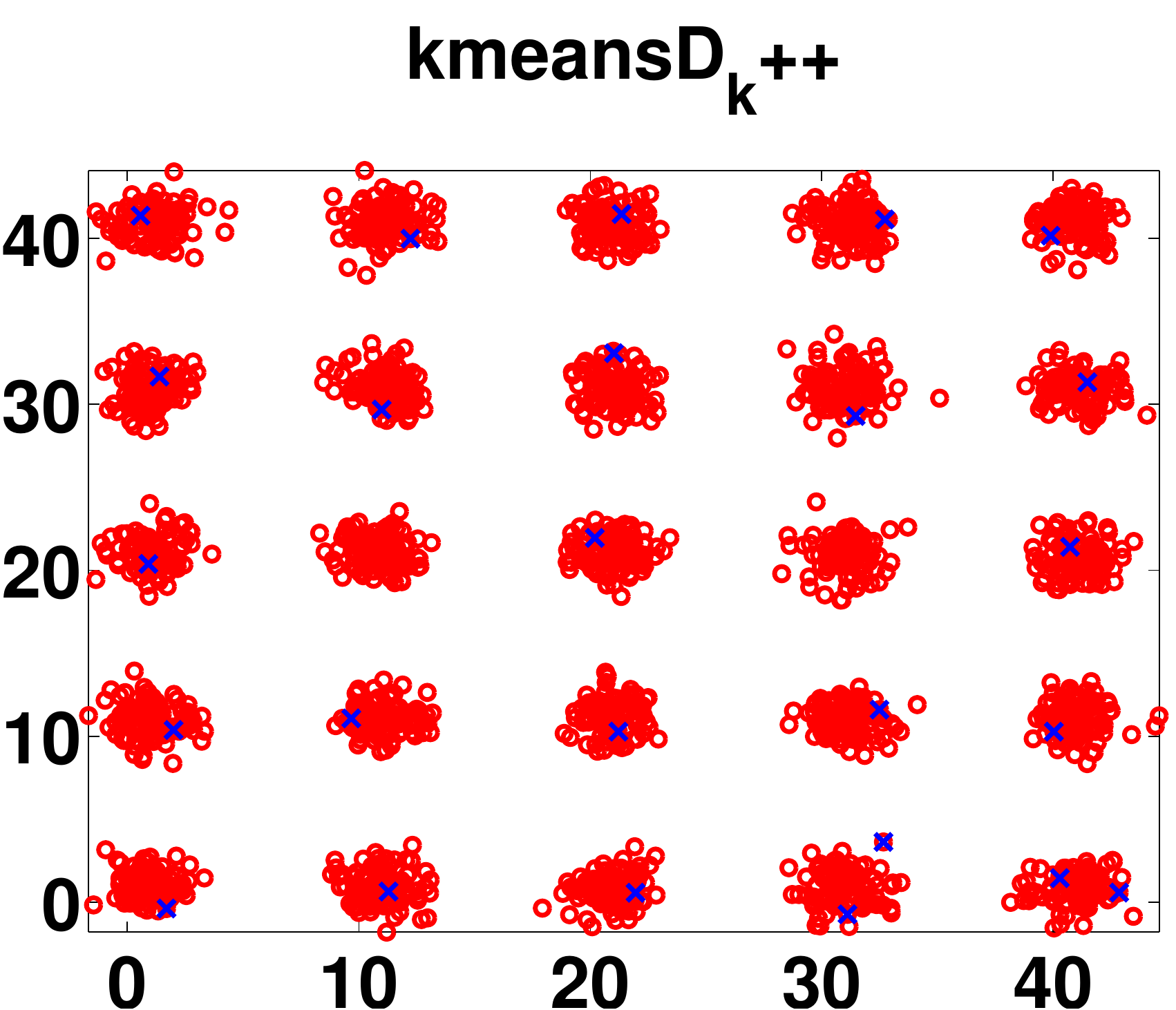}
c)\includegraphics[width = 2in]{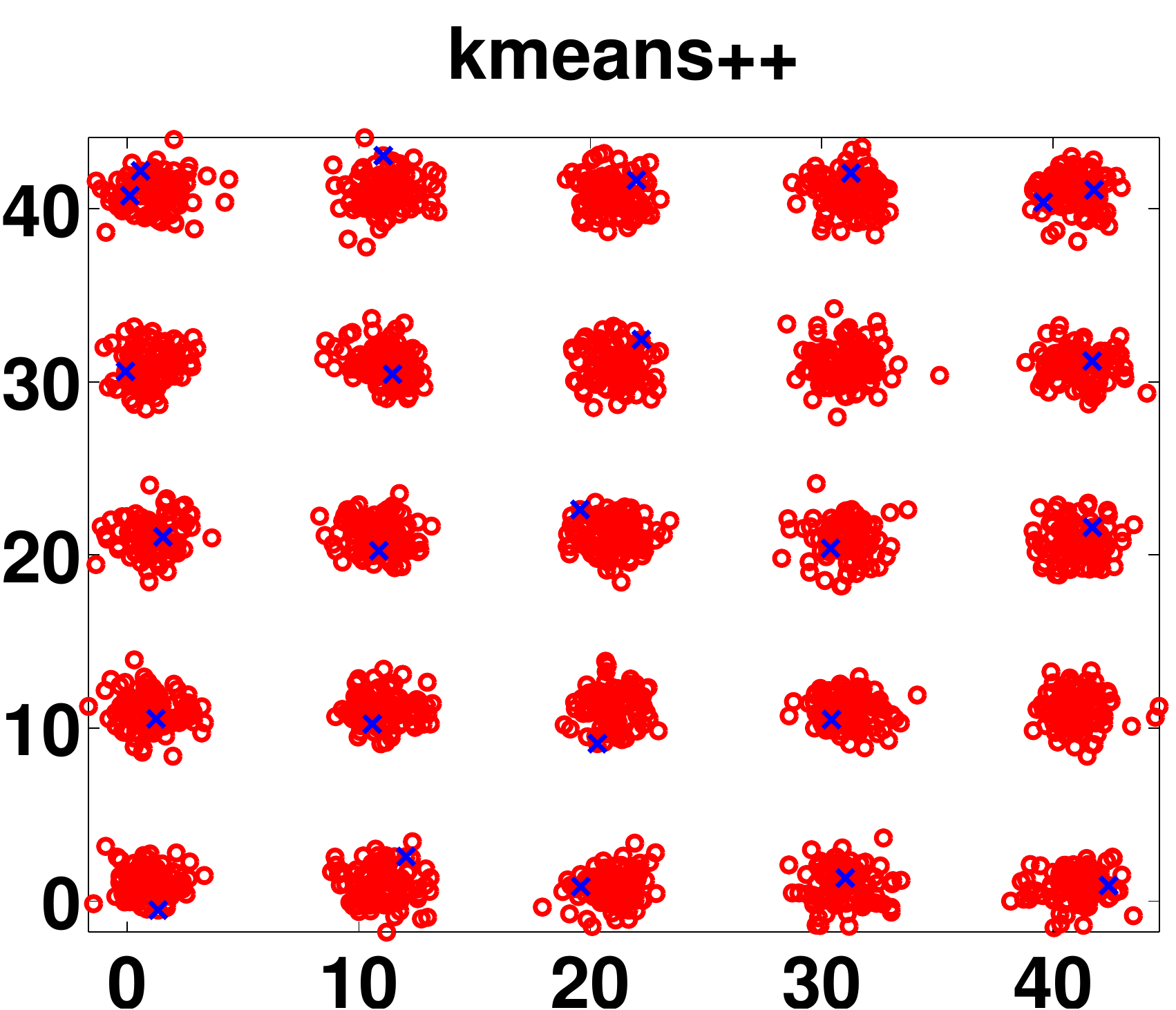}
\caption{The seeds recovered by a) {\kmeansdpp} b) {\kkmeansdpp} and c) {\kmeanspp} initializers on a mixture of $25$ Gaussians.}
\label{fig:synthetic25}
\vspace{-0.2in}
\end{figure*}

\section{{\kmeanspp} versus {\kmeansdpp} }
\label{sec:ppvsDPP}

In this section we will show the fundamental difference between {\kmeanspp} and {\kmeansdpp} initializers. {\kmeanspp} algorithm sampling the initial centroids (also called seeds) for the {\kmeans} algorithm is summarized in Algorithm~$1$. Here, $D(x)$ denotes the shortest Euclidean distance from a data point $x$ to the closest seed from among seeds already chosen ($S$). 
The {\kmeanspp} initializer assigns the highest probability to the data point that is currently the furthest from its closest seed from among the set of seeds chosen already. {\kmeansdpp} initializer chooses the seeds from among the points in the dataset using different probabilities of selecting a new member for set $S$. Before showing the algorithm, we will introduce notation. Let $K$ be the RBF kernel matrix with $(i,j)^{th}$ entrance equal to $K(i,j) = \exp(-\sigma \|x_i - x_j\|^2)$ and $\sigma$ be a fixed positive constant (note that $K$ is of size $n\times n$ and is symmetric positive semi-definite), $K^S$ is a sub-matrix of matrix $K$ of size $|S|\times |S|$ defined by points from $S$, and $K^{S\cup \{x\}}$ is a sub-matrix of matrix $K$ of size $|S|+1\times |S|+1$ defined by points from $S \cup \{x\}$ (both sub-matrices are as well symmetric positive semi-definite). {\kmeansdpp} algorithm is summarized in Algorithm~$2$.\footnote{Note that the practical implementation of the {\kmeansdpp} algorithm differs from the Algorithm~$2$ and follows Algorithm 1 (page 16) from~\cite{Kulesza2012}, however from the perspective of the theoretical analysis the simpler version summarized in Algorithm~$2$ is more convenient.}
\vspace{-0.08in}
\begin{algorithm}
\label{alg:kmeanspp}       
\setlength\tabcolsep{1pt}
\begin{tabular}{ll}
Input: dataset $\mathcal{X}$\\
1) $S = \emptyset$\\
2) Pick a point uniformly at random from $\mathcal{X}$ and add it to $S$.\\
3) for $i = 1:1:k-1$:\\
\:\:\:\:\:\:\:\: a) choose data point $x \in \mathcal{X}$ at random with\\
\:\:\:\:\:\:\:\:\:\:\:\:\: probability $P(x|S) = \frac{D(x)^2}{\sum_{x^{'} \in X}D(x^{'})^2}$\\
\:\:\:\:\:\:\:\: b) $S = S\cup\{x\}$
\end{tabular}
\caption{\kmeanspp}
\end{algorithm}
\vspace{-0.33in}
\begin{algorithm}
\label{alg:kmeansdpp}       
\setlength\tabcolsep{1pt}
\begin{tabular}{ll}
Input: dataset $\mathcal{X}$\\
1, 2 and 4) as in {\kmeanspp}\\ 
3) for $i = 1:1:k-1$:\\
\:\:\:\:\:\:\:\: choose data point $x \in \mathcal{X}$ at random with\\
\:\:\:\:\:\:\:\: probability $P(x|S) = \frac{det(K^{S\cup \{x\}})}{det(K^S)}$
\end{tabular}
\caption{\kmeansdpp}
\end{algorithm}
{\kmeansdpp} favors diversity by putting higher probability to sets of items that are diverse, which is the property that the {\kmeanspp} initializer also has, however the former uses less aggressive initialization scheme, i.e. it does not necessarily put the highest probability to the data point that is currently the furthest from its closest seed from among the set of seeds chosen already. This can be shown by considering a simple example. Let $\mathcal{X} = \{x_1,x_2,x_3\}$ be the set of points on a $1D$ line, where $x_1$ was sampled first and then $x_2$ and $x_3$. We will consider two possible locations for $x_3$, that we will refer to as $x_3^{'}$ and $x_3^{''}$, shown below:\\
\\
a)  $x_1$--------------$0$--------------$x_2$--------------$x_3^{'}$\\
b)  $x_1$--------------$0$--------------$x_2$   and $x_3^{''} = 0$\\
\\
Let $\|x_1\| = \|x_2\| = D$ and $\|x_3^{'} - x_2\| = D-\epsilon$ and let $D$ be fixed such that $D > \sqrt{\frac{\log 6}{4\sigma}}$. 
One can show that the DPP $k$-means initializer will put higher probability to select $x_3^{'}$ then $x_3^{''}$, which is captured in Lemma~\ref{lem:counterexample}. The proof is deferred to the appendix.
\begin{lemma}
There exists $\epsilon \in (0,D)$ such that $P(x_3^{'}|S) > P(x_3^{''}|S)$, thus {\kmeansdpp} initializer can put the highest probability to the point which is not the furthest from the closest seed from among seeds already chosen.
\label{lem:counterexample}
\end{lemma}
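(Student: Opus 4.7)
The plan is to directly compare the (unnormalized) DPP weights $\det(K^{S\cup\{x_3'\}})$ and $\det(K^{S\cup\{x_3''\}})$. With the common seed set $S=\{x_1,x_2\}$ fixed, the denominators $\det(K^S)$ and the overall normalization $\sum_{x\in\mathcal{X}}\det(K^{S\cup\{x\}})$ are identical in both probabilities, so the lemma reduces to showing that the $3\times 3$ determinant is strictly larger when $x_3'$ is chosen than when $x_3''$ is chosen, for some $\epsilon\in(0,D)$.

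First I would place $x_1=-D$, $x_2=D$, $x_3'=2D-\epsilon$, $x_3''=0$ on the line, let $\alpha:=e^{-\sigma D^{2}}$, and record the kernel entries: $a:=K(x_1,x_2)=\alpha^{4}$; for the primed case $b:=K(x_1,x_3')=e^{-\sigma(3D-\epsilon)^{2}}$ and $c:=K(x_2,x_3')=e^{-\sigma(D-\epsilon)^{2}}$; for the double-primed case $b''=c''=K(x_1,x_3'')=K(x_2,x_3'')=\alpha$. Using the closed form $\det\bigl(\begin{smallmatrix}1 & a & b\\ a & 1 & c\\ b & c & 1\end{smallmatrix}\bigr)=1-a^{2}-b^{2}-c^{2}+2abc$, the desired inequality becomes
\[
\Delta(\epsilon)\;:=\;\bigl(b''^{2}+c''^{2}-b^{2}-c^{2}\bigr)+2a\bigl(bc-b''c''\bigr)\;>\;0.
\]

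Next I would analyze $\Delta$ at $\epsilon=0$, where $b(0)=\alpha^{9}$ and $c(0)=\alpha$, and the expression collapses to
\[
\Delta(0)\;=\;\alpha^{2}\bigl(1-2\alpha^{4}+2\alpha^{12}-\alpha^{16}\bigr).
\]
The hypothesis $D>\sqrt{\log 6/(4\sigma)}$ is equivalent to $\alpha^{4}<1/6$, which forces $1-2\alpha^{4}>2/3$, while the tail $\alpha^{12}(2-\alpha^{4})\ge 0$ only helps. Hence $\Delta(0)>0$ strictly. Since $\Delta(\epsilon)$ is continuous in $\epsilon$, there is an $\epsilon\in(0,D)$ on which the strict inequality persists, which yields $P(x_3'|S)>P(x_3''|S)$, as desired.

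The main obstacle is purely algebraic book-keeping: making sure the sign pattern of the determinant formula is handled correctly (in particular, that the cross term $2abc$ is non-negligible) and that the numerical threshold $\sqrt{\log 6/(4\sigma)}$ emerges exactly as the bound needed to dominate the negative contribution $-2\alpha^{4}$. Once that pivot at $\epsilon=0$ is established, the continuity argument is immediate, and the use of $\epsilon\in(0,D)$ rather than $\epsilon=0$ is only a cosmetic requirement of the statement.
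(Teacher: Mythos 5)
Your proof is correct, and it reaches the conclusion by a somewhat more direct route than the paper. Both arguments use the same configuration, the same identity $\det\bigl(\begin{smallmatrix}1 & a & b\\ a & 1 & c\\ b & c & 1\end{smallmatrix}\bigr)=1-a^{2}-b^{2}-c^{2}+2abc$, and the same final mechanism (strict inequality at $\epsilon=0$ plus continuity), but the execution differs. The paper keeps $\epsilon$ symbolic in the ratio $P(x_3\mid S)=\det(K^{S\cup\{x_3\}})/\det(K^{S})$ and weakens the target inequality through a chain of successively stronger ones: it discards the positive cross term $2abc$, bounds $\exp(-18\sigma D^{2}(1-\epsilon/3D)^{2})$ by $\exp(-8\sigma D^{2})\leq\exp(-6\sigma D^{2})$, invokes $D>\sqrt{\log 6/(4\sigma)}$ to absorb $3\exp(-6\sigma D^{2})$, and only then delegates what remains to a separate continuity lemma. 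You instead cancel the common denominator $\det(K^{S})$, form the exact difference $\Delta(\epsilon)$ of the two $3\times3$ determinants, evaluate it in closed form at $\epsilon=0$, and perturb; your value $\Delta(0)=\alpha^{2}(1-2\alpha^{4}+2\alpha^{12}-\alpha^{16})$ and the bound via $\alpha^{4}<1/6$ are both correct. A bonus of your route that you did not exploit: $1-2\alpha^{4}+2\alpha^{12}-\alpha^{16}=(1-\alpha^{4})^{3}(1+\alpha^{4})>0$ for every $\alpha\in(0,1)$, so by retaining the cross term you do not actually need the hypothesis on $D$ at all; that hypothesis is forced in the paper's argument only because it throws away $2abc$. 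One cosmetic remark: Algorithm~2 defines $P(x\mid S)$ without the global normalization $\sum_{x}\det(K^{S\cup\{x\}})$ you mention, and in the paper's two-scenario reading that sum would not be literally identical across scenarios, but since the comparison reduces to the numerator determinants either way this does not affect your argument.
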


\section{Evaluation on synthetic datasets}

\vspace{-0.05in}
\label{sec:eval1}
\begin{table}[ht]
\centering
\setlength\tabcolsep{4pt}
\begin{tabular}{|l|c|c|c|c|c|c||c|}
\hline
$k_t$ & 4 & 9 & 16 & 25 & 36 & 100 & Total\\
\hline
\hline
{\kmeansrand} & 1 & 3 & 6 & 8 & 14 & 31 & 63\\
\hline
{\kmeanspp} & 0 & 1 & 2 & 2 & 2 & 9 & 16\\
\hline
{\kkmeansdpp} & 0 & 1 & 1 & 2 & 4 & 10 & 18\\
\hline
\hline
{\kmeansdpp} & 0 & 0 & 0 & 1 & 0 & 9 & 10\\
\hline
k & 4 & 11 & 18 & 28 & 39 & 105 \\
\cline{1-7}
\end{tabular}
\caption{Comparison of {\kmeansrand}, {\kmeanspp}, {\kkmeansdpp} and {\kmeanspp} initlializers on synthetic datasets. Number of clusters \emph{missed} by each
of the three algorithms. $k_t$ denotes the true number
of clusters. 
}
\label{tab:synthetic}
\vspace{-0.1in}
\end{table}

We  compare {\kmeansrand}, {\kmeanspp}, {\kmeansdpp} and {\kkmeansdpp} initializers on standard synthetic data-sets.
{\kkmeansdpp} refers to the {\kmeansdpp} initializer run with pre-specified number of clusters ($k$).
For these datasets we know the true number of clusters, denoted as $k_{t}$, and we well understand the geometry of the problem. We use mixture of well-separated Gaussians on a 2D grid. The variance of each Gaussian is $1$, the number of points in each of them is $100$ and the separation between them is $10$. The results are presented in Table~\ref{tab:synthetic} (for each experiment we report the median result over $50$ runs). For all the methods we report the number of missing clusters (\textit{missed}). Furthermore, for the {\kmeansdpp} initializer we report the number of clusters recovered automatically ($k)$. Additionally, in Figure~\ref{fig:synthetic25} we show an exemplary result we obtained for a mixture of $25$ Gaussians. The results indicate that the performance of {\kmeansdpp} and {\kmeanspp} initializers are similar and furthermore {\kmeansdpp} initializer is able to recover the true number of clusters underlying the data very accurately without having the number of clusters pre-specified  (the correlation between $k_t$, the true $k$ and the $k$ predicted by {\kmeansdpp} is 0.99). 
This highlights the ability of {\kmeansdpp}
to approximate the true $k$ -- 
an ability that the {\kmeanspp} initializer does not have.

\vspace{-0.05in}
\section{Evaluation on real datasets}
\label{eval2}

In this section, we compare the performance of the {\kmeans} clustering algorithm initialized in two different ways, using the {\kmeansdpp} initializer and using the {\kmeanspp}. The comparison is
presented on three benchmark datasets: \textit{iris, ecoli} and \textit{dermatology}.\footnote{Downloaded from \textsf{archive.ics.uci.edu/ml/datasets.html}.} 
The results are averaged over $50$ runs. 
Table~\ref{table:mnist} 
presents the F1-measures for clustering
the three data-sets using {\kmeanspp} and
{\kkmeansdpp} ({\kmeansdpp} with the number
of clusters pre-specified).
The results show 
that the F1-measures (considering the standard
deviation) for the two clustering
algorithms are comparable,
which implies that {\kmeansdpp} 
is empirically similar to {\kmeanspp}.

\vspace{-0.05in}
\begin{table}[h]
\center
\setlength\tabcolsep{4.5pt}
\begin{tabular}{|c|c||c|c|}
\hline
Datasets & $k_{t}$ & \kmeanspp & {\kkmeansdpp}\\
\hline
\hline
iris & 3 & 0.88\small{$\pm$0.08} & 0.87\small{$\pm$0.10}\\
\hline
ecoli & 8 & 0.56\small{$\pm$0.06} & 0.63\small{$\pm$0.06}\\
\hline
dermatology & 6 & 0.72\small{$\pm$0.12} & 0.68\small{$\pm$0.14}\\
\hline
\end{tabular}
\vspace{-0.05in}
\caption{F1-measure obtained by {\kmeanspp} and {\kkmeansdpp} on benchmark datasets.}
\label{table:mnist}
\vspace{-0.1in}
\end{table}

To highlight that
{\kmeansdpp} is able to 
automatically approximate the true $k$
while maintaining a good clustering
performance,
 we compare the value of the
{\kmeans} clustering objective (called \emph{cost}, lower is better)
of {\kmeansdpp} and {\kkmeansdpp}.
Note, we cannot report F1-measures for this
evaluation since {\kmeansdpp} automatically
selects the number of clusters, which can be
different from the true number of clusters. 

\vspace{-0.05in}
\begin{table}[h]
\center
\setlength\tabcolsep{0.5pt}
\begin{tabular}{|c|c||c|c|c|}
\hline
Data & $k_{t}$ & \multicolumn{2}{c|}{\kmeansdpp} & {\kkmeansdpp}\\
\cline{3-5}
 & & k & cost & cost\\
\hline
\hline
iris & 3 & 3.80\small{$\pm$0.41} & 62.60\small{$\pm$9.19} & 92.94\small{$\pm$27.16}\\
\hline
ecoli & 8 & 6.23\small{$\pm$0.89} & 22.42\small{$\pm$2.75} & 18.64\small{$\pm$1.68}\\
\hline
derm & 6 & 32.63\small{$\pm$0.52} & 2122.31\small{$\pm$27.36} & 3824.52\small{$\pm$282.51}\\
\hline
\end{tabular}
\vspace{-0.05in}
\caption{Performance of {\kmeansdpp}
and {\kkmeansdpp} on benchmark datasets.
$k_t$ is the true number of clusters.
$k$ is the number of clusters automatically 
approximated by {\kmeansdpp}.
\emph{cost} is the value of the {\kmeans}
objective.}
\label{tab:autovsnonauto}
\vspace{-0.1in}
\end{table}
 
Table~\ref{tab:autovsnonauto} shows two results:
1) the $k$ predicted by {\kmeansdpp}
is close the true $k_t$ (columns 2 and 3)
and 
2) the quality of clustering in terms of the
cost of {\kmeansdpp} and {\kkmeansdpp}
is comparable. The exception is the \textit{dermatology} (\textit{derm}) dataset, for which interestingly every feature has $34$ attributes which is very close to the number of clusters that {\kmeansdpp} recovered. Since the DPP sampling
 algorithm uses the eigen-value decomposition,
 it seems that the sampler is mis-lead in
 thinking that data-set has $\sim34$ classes. 
This behavior of the DPP sampler is interesting
and requires further investigation (perhaps it is caused by weakly dependent features). Simultaneously, the $k$-means cost of the clusterings recovered by {\kmeansdpp} on the \textit{dermatology} dataset is significantly lower than the cost of {\kkmeansdpp}. Note that it can be justified by the fact that when {\kmeansdpp} resp. largely overestimates/underestimates $k$, the $k$-means cost of {\kmeansdpp} should be resp. lower/higher than {\kkmeansdpp} because choosing resp. larger/smaller $k$ typically implies resp. smaller/larger average distance of a data point to its closest cluster center.

\vspace{-0.07in}
\section{Evaluation on a Real Text Clustering Task}
\label{sec:eval3}

In Anonymous 2014, we introduced a novel
task 
of automatically drawing \emph{xkcd}
movie narrative charts (right half of Figure 2) 
from textual screenplays (top left of Figure 2). 
We presented an end-to-end pipeline,
employing algorithms from natural language
processing, social network analysis and 
machine learning literature. 
The main focus
of Anonymous 2014 was to present a novel
task, its motivation, and  a basic system
pipeline. However, in this paper,
 we are only concerned with improving
the key component of the pipeline
-- the text clustering module. 

While
for other text clustering tasks, heuristically setting
$k$ may not
be a major limitation, for the task at hand, it is
critical that we have an automatic way of selecting 
(or approximating) $k$.
This is because, in trying to cluster one data-set,
it is well justified to use domain knowledge
and human intuition to set $k$
or to refine
$k$ by observing the output.
However, for the task at hand, we need to find a clustering
per movie. Since there are hundreds of movies,
each with unique characteristics, 
heuristically setting $k$ is not feasible.


\begin{figure*}[ht!]
\center
\includegraphics[width=3.2in]{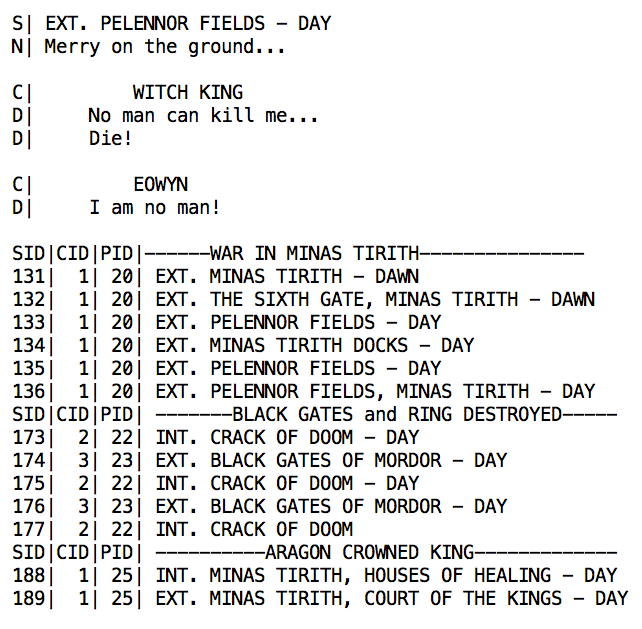}
\includegraphics[width=3.2in]{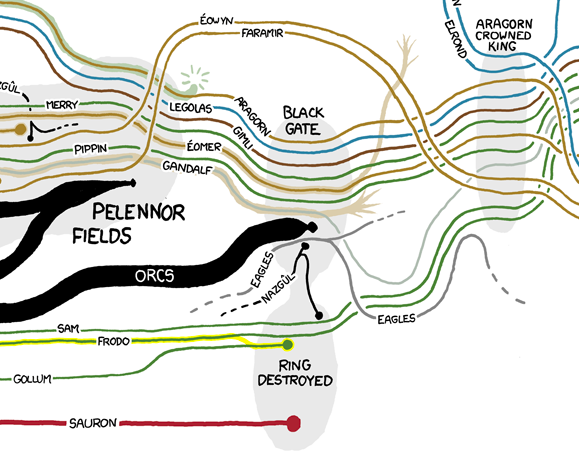}
\vspace{-0.1in}
\caption{Right half: \emph{xkcd} movie narrative chart for part of the movie \emph{Lord of the Rings}.
These charts show character interactions. The horizontal axis is time. The vertical grouping of lines indicates which characters are together at a given time.
Source of image: \url{http://xkcd.com/657/large/}
Left top: snippet from the textual screenplay used as input
for automatically creating the chart.
Left bottom: scene boundaries and their scene identifier (SID),
their cluster identifier (CID), and their plot identifier (PID).}
\label{fig:xkcd}
\vspace{-0.1in}
\end{figure*}
\begin{table*}[ht!]
\setlength{\tabcolsep}{9pt}
\centering
\begin{tabular}{|l|c|c|c|c|c|c|}
\hline
Movie & \# scenes (n) & \# locations (gold $k$) & log(n) & $\sqrt{n}$ & $k_{\kmeansdpp}$\\
\hline
Star Wars & 137 & 35  & 2.13 & 11.7 & 41.98\small{$\pm$3.30} \\
\hline
The Last Crusade & 148 &  57  & 2.17 & 12.16 & 47.72\small{$\pm$3.50} \\
\hline
 Raiders of the Lost Ark & 139 &  73 & 2.14 & 11.78 & 51.56\small{$\pm$5.05}\\
\hline
Pirates of the Caribbean & 140 &  23 & 2.14 & 11.83 & 41.24\small{$\pm$4.32}\\
\hline
The Bourne Identity & 160 & 74  & 2.20 & 12.64 & 61.98\small{$\pm$5.03}\\
\hline
Batman & 209 & 77  & 2.32 & 14.45 & 71.42\small{$\pm$5.14}\\
\hline
\hline
Correlation with gold $k$ & 0.58 & 1 & 0.59 & 0.58 & 0.84\\
\hline
\end{tabular}
\vspace{-0.05in}
\caption{List of movies, the number of scene boundaries,
and the number of unique locations per movie in our  test set (first three columns). Automatically selected $k$ by commonly used heuristic functions (next two columns). Automatically selected $k$ by using DPPs: mean and standard deviation over 50 runs (last two columns). Last row of the table shows the correlation between predicted $k$ and gold $k$.}
\label{table:data}
\vspace{-0.2in}
\end{table*}

\subsection{Terminology and Task Definition}
\label{subsec:terminology}

\newcite{Turetsky2004} describe the structure of a movie
screenplay. A screenplay is written using a strict formatting grammar.
It has \emph{scene boundaries} that textually separate
 scenes of a movie. 
 Figure~\ref{fig:xkcd}
 shows some of the scene boundaries from the movie
 \emph{The Lord of the Rings}.  
A scene boundary indicates whether the scene is
to take {place inside or outside} (INT, EXT),
the {name of the location},
and can potentially specify the {time of day} (e.g. DAY or NIGHT).
The clustering task is to cluster scene boundaries
(based on their lexical similarity) into $k$ clusters (with $k$ unknown).
Since scene boundaries specify the location at which
a scene is shot, the goal is to automatically determine
the number and description of different scene locations 
in a movie (we remove
tags INT./EXT., DAY/NIGHT before clustering). 

Scene locations mentioned in scene boundaries are lexically similar, but not exactly the same. This is because a scene boundary, more often than not, describes a scene location, along with sub-location(s). For example, in Figure~\ref{fig:xkcd}, the scene location Minas Tirith, which is a city, has multiple sub-locations such as ``DOCKS'' and  ``HOUSES OF HEALING''. Moreover, there are inconsistencies in the scene location descriptions. For example, some scene location descriptions for Pelennor Fields, which is a sub-location associated with Minas Tirith, are present as ``PELENNOR FIELDS/MINAS TIRITH'', whereas others are present as ``PELENNOR FIELDS''. As a consequence, a simple exact string matching algorithm is insufficient to find scene boundaries that belong to one location.


\vspace{-0.1in}
\subsection{Data}
\label{subsec:data}

To prepare a gold standard for this evaluation, 
we trained two human annotators to read a screenplay 
and mark all scenes (or scene boundaries) 
that belong to one location with a unique integer (which
we refer to as cluster identifier).
For example, in Figure~\ref{fig:xkcd},
one of our annotators marked scene boundaries (\textsc{sid})
from 131 through 136 with cluster (or location)
 identifier (\textsc{cid}) 1.
 This means that all these scenes take place at one location,
 namely \textsc{Minas Tirith}. While performing the annotation
 task, the annotators used world knowledge that \textsc{Pelennor Fields}
 is a sub-location of \textsc{Minas Tirith} and thus should
 be marked with the same cluster identifier.  
 Since we are clustering based on lexical similarity, 
 to put lexically dissimilar strings \textsc{Pelennor Fields}
 and \textsc{Minas Tirith} together, 
 our algorithm relies on the fact that they 
  are mentioned together in
 a few scene boundaries (as is the
 case -- see scene number 136). 
 
After a few rounds of training
we asked our annotators 
to fully annotate the screenplay for the movie \emph{Pirates of the Caribbean: Dead Man's Chest}.
They achieved a high agreement of 0.86.
We then asked our annotators to divide the
remaining set of screenplays into half,
each responsible for one half.

Table~\ref{table:data} gives the list of movies
we annotated, along with 
the number of scenes and number of locations  
in each movie.
We use these screenplays for evaluating our
methodology.


\subsection{Evaluation and Results}
\label{subsec:eval}

We calculate lexical (or string) similarity using a contiguous
word kernel \cite{Lodhi2002}.
We compare three ways of sampling the initial
centroids for the {\kmeans} algorithm: {\kmeansrand}, 
{\kmeanspp}, and {\kmeansdpp}. 

To set $k$ for {\kmeansrand}, we employ
 common heuristics used in the literature:
  $k = \log(n)$ or $\sqrt{n}$, 
where $n$
is the number of data points. Table~\ref{table:data}
presents the predicted number of $k$
using the functions $\log(n)$, $\sqrt{n}$.
We run {\kmeansdpp}  50 times
and report the mean and standard deviation 
of the number of initial centroids selected by the
DPP sampling algorithm automatically. The last row of 
table~\ref{table:data} shows the correlation of
the predicted $k$ with the gold $k$ for the
three methods.\footnote{Multiplying or adding a constant to the functions $\log(n)$, $\sqrt{n}$
will not change the correlation.}
Deciding $k$ using DPPs has a significantly 
higher correlation with the gold $k$ (0.84)
as compared to other standard methods (0.59 and 0.58).
Note that the correlation of the number of scenes
and the gold $k$ is low (0.58), so any monotonic function
of the number of data-points will not have a much
different correlation. This result shows that DPPs are well-suited for choosing $k$ for this data-set.

Next, we show that even if we provide the
{\kmeans} algorithm with the gold $k$, 
sampling using DPPs provides a better initialization,
which results in a better clustering. 
Table~\ref{table:fmeas} shows
the macro-F1-measures for clustering obtained
by three different ways
of sampling the initial centroids. 
The numbers show that sampling using DPPs
results in a significantly better clustering (higher F1-measure).
\vspace{-0.07in}
\begin{table}[h!]
\setlength{\tabcolsep}{3.2pt}
\centering
\begin{tabular}{|l|c|c|c|c|}
\hline
Movie & k & {\small \kmeansrand} & {\small \kmeanspp} & {\small \kkmeansdpp}\\
\hline
Star Wars & 35 & 0.61\small{$\pm$0.04} & {0.62}\small{$\pm$0.02} & {\bf 0.63}\small{$\pm$0.04}\\
\hline
Crusade & 57 & 0.80\small{$\pm$0.04} & { 0.84}\small{$\pm$0.02} & {\bf 0.86}\small{$\pm${0.02}}\\
\hline
Raiders & 73 & 0.68\small{$\pm$0.03} & { 0.76}\small{$\pm${ 0.02}} & {\bf 0.77}\small{$\pm${ 0.02}}\\
\hline
Pirates & 23 & 0.62\small{$\pm$0.04} & {\bf 0.63}\small{$\pm$0.02} & 0.61\small{$\pm$0.04}\\
\hline
Bourne & 74 & 0.64\small{$\pm${0.03}} & { \bf 0.69}\small{$\pm$0.03} & { 0.68}\small{$\pm$0.05}\\
\hline
Batman & 77 & 0.62\small{$\pm$0.03} & { 0.63}\small{$\pm$0.02} & {\bf 0.66}\small{$\pm$0.03}\\
\hline
\end{tabular}
\vspace{-0.05in}
\caption{Mean and standard deviations of F1-measure on the test set.}
\label{table:fmeas}
\vspace{-0.3in}
\end{table}

\section{Conclusion and Future Work}
We conclude that {\kmeansdpp} compares favorably to 
{\kmeanspp} and performs better than {\kmeansrand} 
with two additional advantages: it may be used in scenarios 
where explicit feature representation is absent and
where the $k$ is unknown.
In the future, we will attempt to prove approximation guarantees with respect to the $k$-means clustering objective for the {\kmeansdpp} algorithm. 

\section{Appendix}
First, we will show a useful lemma that we will use later.

\begin{lemma}
There exists $\epsilon \in (0,D)$ such that
\begin{equation}
\exp(-2\gamma D^2(1\!-\!\frac{\epsilon}{D})^2) - \exp(-2\gamma D^2) \!<\! \frac{\exp(-2\gamma D^2)}{2}
\end{equation}
\label{lem:one}
\end{lemma}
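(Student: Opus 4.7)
The plan is to exploit continuity at $\epsilon = 0$. Define
\[
f(\epsilon) \;=\; \exp\!\bigl(-2\gamma D^2 (1 - \epsilon/D)^2\bigr) - \exp(-2\gamma D^2).
\]
The first thing I would do is plug in $\epsilon = 0$ and observe $f(0) = \exp(-2\gamma D^2) - \exp(-2\gamma D^2) = 0$, while the right-hand side $\exp(-2\gamma D^2)/2$ is a strictly positive constant that does not depend on $\epsilon$. Hence the strict inequality holds at $\epsilon = 0$ with a positive slack of $\exp(-2\gamma D^2)/2$.

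Next I would invoke continuity: $f$ is a composition of elementary continuous functions of $\epsilon$, hence continuous on all of $\mathbb{R}$, and in particular on the closed interval $[0, D]$. Since $f(0) = 0 < \exp(-2\gamma D^2)/2$, there exists $\delta > 0$ such that $f(\epsilon) < \exp(-2\gamma D^2)/2$ for all $\epsilon \in [0, \delta)$. Taking any $\epsilon \in \bigl(0, \min(\delta, D)\bigr)$ yields an element of the required interval $(0, D)$ satisfying the desired inequality.

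The hard part is essentially nothing — the claim is a small-perturbation statement that drops out of continuity plus the trivial base case $\epsilon = 0$. The only bookkeeping is making sure the chosen $\epsilon$ lies in $(0, D)$, which is why one intersects the continuity neighborhood with $(0, D)$. If a quantitative $\epsilon$ were needed later (for instance, to chain with the argument that compares $P(x_3^{'}\mid S)$ and $P(x_3^{''}\mid S)$), one could instead solve the inequality explicitly: it reduces to $2\gamma(2D\epsilon - \epsilon^2) < \ln(3/2)$, so any $\epsilon < \ln(3/2)/(4\gamma D)$ suffices, but for the existence statement as stated continuity alone is enough.
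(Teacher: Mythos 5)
Your proof is correct, and it matches the paper's approach in spirit: the paper simply declares the lemma ``straight-forward for fixed $D$,'' and your continuity argument at $\epsilon=0$ is precisely the straightforward justification being alluded to. The optional quantitative version you note (the inequality reduces to $2\gamma(2D\epsilon-\epsilon^2)<\ln(3/2)$, so any $\epsilon<\ln(3/2)/(4\gamma D)$ works) is a correct and slightly more informative refinement than what the paper provides.
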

\begin{proof}
For a fixed $D$ this result is straight-forward.
\end{proof}

\begin{proof}[Proof of Lemma~\ref{lem:counterexample}]

$x_1$, $x_2$ and $x_3$ are respectively the first, second and third data point chosen by the {\kmeansdpp} initializer. Thus we have that
\begin{equation*}
P(x_2|S = x_1) = \frac{det(K^{x_2\cup x_1})}{det(K^{x_1})}= 1 - \exp(-2\sigma \|x_1 - x_2\|^2)
\end{equation*}
and
\[P(x_3|S = x_1\cup x_2) = \frac{det(K^{x_1\cup x_2\cup x_3})}{det(K^{x_1\cup x_2})}
\]
\[= 1 - \frac{1}{1 - \exp(-2\sigma \|x_1 - x_2\|^2)}\cdot \{\exp(-2\sigma\|x_2 - x_3\|^2)
\]
\[+\exp(-2\sigma\|x_1 - x_3\|^2)
\]
\[-2\exp(-\sigma(\|x_1 - x_2\|^2+\|x_2 - x_3\|^2 + \|x_1 - x_3\|^2))\}
\]
Note that since $\|x_1 - x_2\| = 2D$, $\|x_3^{'} - x_2\| = D-\epsilon$, $\|x_3^{'} - x_1\| = 3D-\epsilon$, $\|x_3^{''} - x_2\| = D$ and $\|x_3^{''} - x_1\| = D$, the following chain of inequlities are equivalent:
\[P(x_3^{'}|S) > P(x_3^{''}|S)
\]
\[\Longleftrightarrow \exp(-2\sigma(3D-\epsilon)^2) + \exp(-2\sigma(D-\epsilon)^2)
\]
\[ - 2\exp(-\sigma(4D^2+(3D-\epsilon)^2 + (D-\epsilon)^2))
\]
\[\leq 2\exp(-2\sigma D^2) - 2\exp(-6\sigma D^2)
\]
\[\Longleftrightarrow \exp(-18\sigma D^2(1-\frac{\epsilon}{3D})^2) + \exp(-2\sigma D^2(1-\frac{\epsilon}{D})^2)
\]
\[ - 2\exp(-\sigma(4D^2+(3D-\epsilon)^2 + (D-\epsilon)^2))
\]
\[\leq 2\exp(-2\sigma D^2) - 2\exp(-6\sigma D^2) 
\]
We want to prove that the last inequality holds. We will show that by instead showing the series of stronger inequalities that hold and imply the above one. Note, that the inequality that implies the above one is given below
\begin{eqnarray}
\exp(-18\sigma D^2(1-\frac{\epsilon}{3D})^2) + \exp(-2\sigma D^2(1-\frac{\epsilon}{D})^2) \nonumber\\
 \leq 2\exp(-2\sigma D^2) - 2\exp(-6\sigma D^2) 
\label{eq:firstineq}
\end{eqnarray}
This inequality can be rewritten as
\[2\exp(-6\sigma D^2) + \exp(-18\sigma D^2(1-\frac{\epsilon}{3D})^2)
\]
\[ + \exp(-2\sigma D^2(1-\frac{\epsilon}{D})^2) \leq 2\exp(-2\sigma D^2) 
\]
Recall that $\epsilon < D$ thus
\[1 - \frac{\epsilon}{3D} > \frac{2}{3} \Longleftrightarrow \exp(-18\sigma D^2(1 - \frac{\epsilon}{3D})^2) < \exp(-8\sigma D^2)
\]
Thus, an even stronger inequality than the one in Equation~\ref{eq:firstineq} is the following one
\begin{eqnarray}
2\exp(-6\sigma D^2) + \exp(-8\sigma D^2) \nonumber\\
 + \exp(-2\sigma D^2(1-\frac{\epsilon}{D})^2) < 2\exp(-2\sigma D^2) 
\label{eq:secondineq}
\end{eqnarray}
The inequality in Equation~\ref{eq:secondineq} implies the inequality in Equation~\ref{eq:firstineq}. Note that $\exp(-8\sigma D^2) \leq \exp(-6\sigma D^2)$ thus one can construct an even stronger inequality given in Equation~\ref{eq:thirdineq}, than the one in Equation~\ref{eq:secondineq} that directly implies Equation~\ref{eq:secondineq} and therefore also Equation~\ref{eq:firstineq}. 
\begin{equation}
3\exp(\!-6\sigma D^2) \!+\! \exp(\!-2\sigma D^2(1\!\!-\!\!\frac{\epsilon}{D})^2) \!\leq\! 2\exp(\!-2\sigma D^2) 
\label{eq:thirdineq}
\end{equation}
Recall that 
\begin{equation*}
D > \sqrt{\frac{\log 6}{4\sigma}} \Longleftrightarrow 3\exp(-6\sigma D^2) < \frac{1}{2}\exp(-2\sigma D^2)
\end{equation*}
Finally, we will below provide the last inequality, in Equation~\ref{eq:fourthineq}, which is the strongest from all discussed before as, if it holds, it directly implies the inequalities in Equation~\ref{eq:thirdineq} and therefore also  Equation~\ref{eq:secondineq} and~\ref{eq:firstineq}.
\begin{equation}
\frac{\exp(-2\sigma D^2)}{2} + \exp(-2\sigma D^2(1-\frac{\epsilon}{D})^2) < 2\exp(-2\sigma D^2) 
\label{eq:fourthineq}
\end{equation}
This equality can be equivalently rewritten as
\begin{equation*}
\exp(-2\sigma D^2(1-\frac{\epsilon}{D})^2) - \exp(-2\sigma D^2) < \frac{\exp(-2\sigma D^2)}{2},
\end{equation*}
where the last inequality holds by Lemma~\ref{lem:one}.
\end{proof}

\newpage
\bibliographystyle{aaai}
\bibliography{PAPER}


\end{document}